\newtheorem{mydefinition}{Definition}
\newtheorem{proposition}[mydefinition]{Proposition}
\newtheorem{theorem}[mydefinition]{Theorem}
\newtheorem{remark}[mydefinition]{Remark}
\newtheorem{definition}[mydefinition]{Definition}
\newtheorem{corollary}[mydefinition]{Corollary}
\DeclareMathOperator*{\argmax}{arg\,max}
\newcommand{\myvec}[1]{\mathbf{#1}}
\newcommand{\myvecsym}[1]{\boldsymbol{#1}}
\newcommand{\valpha}{\myvecsym{\alpha}}
\newcommand{\vphi}{\myvecsym{\phi}}
\newcommand{\vtheta}{\myvecsym{\theta}}
\newcommand{\vx}{\myvec{x}}
\newcommand{\calC}{{\cal C}}
\newcommand{\be}{\begin{equation}}
\newcommand{\ee}{\end{equation}}
\newcommand{\bea}{\begin{eqnarray}}
\newcommand{\eea}{\end{eqnarray}}
\newcommand{\beaa}{\begin{eqnarray*}}
\newcommand{\eeaa}{\end{eqnarray*}}
\DeclareMathAlphabet{\mathpzc}{OT1}{pzc}{m}{n}
\newcommand{\keywordDef}[1]{{\emph{#1}}}
\newcommand{\V}{\mathcal{V}}
\newcommand{\fq}{\mathcal{A}_{q}}
\newcommand{\Aq}{\mathcal{A}_{q}}
\title{Distributed Parameter Estimation \\ in Probabilistic Graphical Models}
\author{
Yariv D. Mizrahi$^1$ \quad Misha Denil$^2$ \quad Nando de Freitas$^{2,3}$\\
$^1$University of British Columbia, United Kingdom\\
$^2$University of Oxford, United Kingdom\\
$^3$Canadian Institute for Advanced Research\\
\texttt{yariv@math.ubc.ca}\\
\texttt{\{misha.denil,nando\}@cs.ox.ac.uk}
}
\begin{document}

\maketitle

\begin{abstract}
This paper presents foundational theoretical results on distributed parameter estimation for undirected probabilistic graphical models. It introduces a general condition on composite likelihood decompositions of these models which guarantees the global consistency of distributed estimators, provided the local estimators are consistent. 
\end{abstract}

\section{Introduction}
\label{sec:Introduction}

Undirected probabilistic graphical models, also known as Markov Random Fields (MRFs), are a natural framework for modelling in networks, such as sensor networks and social networks~\cite{Strauss:1990,Koller:2009,Murphy:2012}.  In large-scale domains there is great interest in designing distributed learning algorithms to estimate parameters of these models from data~\cite{Wiesel:2012,Liu:2012,Mizrahi:2014}. Designing distributed algorithms in this setting is challenging because the distribution over variables in an MRF depends on the global structure of the model.

In this paper we make several theoretical contributions to the design of algorithms for distributed parameter estimation in MRFs by showing how the recent works of Liu and Ihler~\cite{Liu:2012} and of Mizrahi \emph{et~al.}~\cite{Mizrahi:2014} can both be seen as special cases of \emph{distributed composite likelihood}.
Casting these two works in a common framework allows us transfer results between them, strengthening the results of both works.

Mizrahi \emph{et~al.} introduced a theoretical result, known as the \emph{LAP condition}, to
show that it is possible to learn MRFs with untied parameters in a fully-parallel but globally consistent manner. Their result lead to the construction of a globally consistent estimator, whose cost is linear in the number of cliques as opposed to exponential as in centralised maximum likelihood estimators.  While remarkable, their results apply only to a specific factorisation, with the cost of learning being exponential in the size of the factors. While their factors are small for lattice-MRFs and other models of low degree, they can be as large as the original graph for other models, such as fully-observed Boltzmann machines~\cite{Ackley:1985}. In this paper, we introduce the \emph{Strong LAP Condition}, which characterises a large class of composite likelihood factorisations for which it is possible to obtain global consistency, provided the local estimators are consistent. This much stronger sufficiency condition enables us to construct linear and globally consistent distributed estimators for a much wider class of models than Mizrahi~\emph{et~al.}, including fully-connected Boltzmann machines. 

Using our framework we also show how the asymptotic theory of Liu and Ihler applies more generally to distributed composite likelihood estimators. In particular, the Strong LAP Condition provides a sufficient condition to guarantee the validity of a core  
assumption made in the theory of Liu and Ihler, namely that each local estimate for the parameter of a clique is a consistent estimator of the corresponding clique parameter in the joint distribution. By applying the
Strong LAP Condition to verify the assumption of Liu and Ihler, we are able to import their M-estimation results into the LAP framework directly, bridging the gap between LAP and consensus estimators.

\begin{figure}[t!]
\centering
\begin{tikzpicture}
  [scale=.4,auto=left,every node/.style={circle,fill=blue!20}]
  \foreach \c in {0.8}{
  \foreach \x in {-4.2}{
  \foreach \y in {0}{
   \foreach \h in {0.8}{
   \node (n1) [align=center,circle,scale=\c] at (\x+1*\h,\y+10*\h) {1};
  \node (n2)[align=center,circle,scale=\c] at (\x+4*\h,\y+10*\h)  {2};
  \node (n3) [align=center,circle,scale=\c] at (\x+7*\h,\y+10*\h)  {3};
  \node (n4) [align=center,circle,scale=\c] at (\x+1*\h,\y+7*\h) {4};
  \node (n5) [align=center,circle,scale=\c]  at (\x+4*\h,\y+7*\h)  {5};
  \node (n6) [align=center,circle,scale=\c] at (\x+7*\h,\y+7*\h)  {6};
   \node (n7) [align=center,circle,scale=\c] at (\x+1*\h,\y+4*\h) {7};
  \node (n8) [align=center,circle,scale=\c] at (\x+4*\h,\y+4*\h)  {8};
  \node (n9) [align=center,circle,scale=\c] at (\x+7*\h,\y+4*\h)  {9};

  \foreach \from/\to in {n1/n4,n1/n2,n2/n3,n2/n5,n3/n6,n4/n5,n4/n7,n5/n6,n5/n8,n6/n9,n7/n8,n8/n9}
    \draw (\from) -- (\to);
}}}}
\foreach \c in {0.8}{
\foreach \x in {4.2}{
  \foreach \y in {0}{
   \foreach \h in {0.8}{
 \node[fill=blue!3]   (n1) [align=center,circle,scale=\c] at (\x+1*\h,\y+10*\h) {1};
  \node [fill=blue!3](n2)[align=center,circle,scale=\c] at (\x+4*\h,\y+10*\h)  {2};
  \node [fill=blue!3](n3) [align=center,circle,scale=\c] at (\x+7*\h,\y+10*\h)  {3};
  \node (n4) [align=center,circle,scale=\c] at (\x+1*\h,\y+7*\h) {4};
  \node (n5) [align=center,circle,scale=\c]  at (\x+4*\h,\y+7*\h)  {5};
  \node[fill=blue!3] (n6) [align=center,circle,scale=\c] at (\x+7*\h,\y+7*\h)  {6};
   \node (n7) [align=center,circle,scale=\c] at (\x+1*\h,\y+4*\h) {7};
  \node (n8) [align=center,circle,scale=\c] at (\x+4*\h,\y+4*\h)  {8};
  \node (n9) [align=center,circle,scale=\c] at (\x+7*\h,\y+4*\h)  {9};
    \foreach \from/\to in {n8/n9,n7/n4,n8/n5,n4/n5}
    \draw (\from) -- (\to);
     \draw[dashed] (n4) to[out=55,in=80] (n9);
     \draw[dashed] (n5) to[out=-45,in=135] (n9);
     \draw[red] (n7) to[out=0,in=180] (n8);
     }}}}
     \foreach \c in {0.8}{
\foreach \x in {12.4}{
  \foreach \y in {0}{
   \foreach \h in {0.8}{
 \node[fill=blue!3]   (n1) [align=center,circle,scale=\c] at (\x+1*\h,\y+10*\h) {1};
  \node [fill=blue!3](n2)[align=center,circle,scale=\c] at (\x+4*\h,\y+10*\h)  {2};
  \node [fill=blue!3](n3) [align=center,circle,scale=\c] at (\x+7*\h,\y+10*\h)  {3};
  \node (n4) [align=center,circle,scale=\c] at (\x+1*\h,\y+7*\h) {4};
  \node (n5) [align=center,circle,scale=\c]  at (\x+4*\h,\y+7*\h)  {5};
  \node[fill=blue!3] (n6) [align=center,circle,scale=\c] at (\x+7*\h,\y+7*\h)  {6};
   \node (n7) [align=center,circle,scale=\c] at (\x+1*\h,\y+4*\h) {7};
  \node (n8) [align=center,circle,scale=\c] at (\x+4*\h,\y+4*\h)  {8};
  \node (n9) [align=center,circle,scale=\c] at (\x+7*\h,\y+4*\h)  {9};
    \foreach \from/\to in {n8/n9,n7/n4,n8/n5}
    \draw (\from) -- (\to);
     \draw[red] (n7) to[out=0,in=180] (n8);
     }}}}
\foreach \c in {0.8}{
\foreach \x in {20.8}{
  \foreach \y in {0}{
   \foreach \h in {0.8}{
 \node[fill=blue!3]   (n1) [align=center,circle,scale=\c] at (\x+1*\h,\y+10*\h) {1};
  \node [fill=blue!3](n2)[align=center,circle,scale=\c] at (\x+4*\h,\y+10*\h)  {2};
  \node [fill=blue!3](n3) [align=center,circle,scale=\c] at (\x+7*\h,\y+10*\h)  {3};
  \node (n4) [align=center,circle,scale=\c] at (\x+1*\h,\y+7*\h) {4};
  \node[fill=blue!3] (n5) [align=center,circle,scale=\c]  at (\x+4*\h,\y+7*\h)  {5};
  \node[fill=blue!3] (n6) [align=center,circle,scale=\c] at (\x+7*\h,\y+7*\h)  {6};
   \node (n7) [align=center,circle,scale=\c] at (\x+1*\h,\y+4*\h) {7};
  \node (n8) [align=center,circle,scale=\c] at (\x+4*\h,\y+4*\h)  {8};
  \node[fill=blue!3] (n9) [align=center,circle,scale=\c] at (\x+7*\h,\y+4*\h)  {9};
    \foreach \from/\to in {n7/n4}
    \draw (\from) -- (\to);
     \draw[red] (n7) to[out=0,in=180] (n8);
     }}}}

\end{tikzpicture}
\caption{\textbf{Left:} A simple 2d-lattice MRF to illustrate our notation.  For node $j=7$ we have $\mathcal{N}(x_j) = \{x_4,x_8\}$.  \textbf{Centre left:} The 1-neighbourhood of the clique $q=\{x_7,x_8\}$ including additional edges (dashed lines) present in the marginal over the 1-neighbourhood.  Factors of this form are used by the LAP algorithm of Mizrahi \emph{et.\ al.} \textbf{Centre right:} The MRF used by our conditional estimator of Section~\ref{sec:clap} when using the same domain as Mizrahi \emph{et.\ al.} \textbf{Right:} A smaller neighbourhood which we show is also sufficient to estimate the clique parameter of $q$.}
\label{fig:example}
\end{figure}
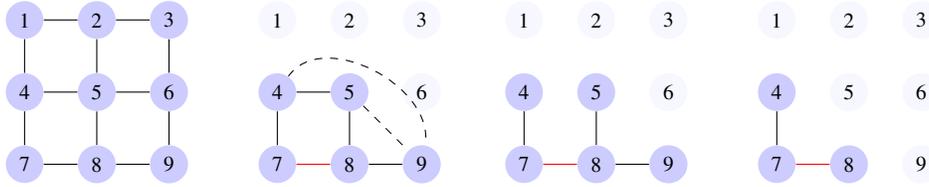


\section{Background}
\label{sec:background}


Our goal is to estimate the $D$-dimensional parameter vector $\vtheta$ of an MRF with the following \keywordDef{Gibbs density} or \keywordDef{mass function}: 
\begin{align}
p(\vx\,|\,\vtheta) = 
\frac{1}{Z(\vtheta)}\exp (-\sum_c E(\vx_c\,|\,\vtheta_c))
\enspace
\end{align}
Here $c \in \calC$ is an index over the cliques of an undirected graph $\mathcal{G}=(\mathcal{V},\mathcal{E})$, $E(\vx_c\,|\,\vtheta_c)$ is known as the \keywordDef{energy} or \keywordDef{Gibbs potential}, and $Z(\vtheta)$ is a normalizing term known as the \keywordDef{partition function}.  

When $E(\vx_c\,|\,\vtheta_c) = -\vtheta_c^T \vphi_c(\vx_c)$,
where $\vphi_c(\vx_c)$  is a local sufficient statistic derived from the values of the
local data vector $\vx_c$,
this model is known as a \keywordDef{maximum entropy} or \keywordDef{log-linear} model.  In this paper we do not restrict ourselves to a specific form for the potentials, leaving them as general functions; we require only that their parameters are identifiable.  Throughout this paper we focus on the case where the $x_j$'s are discrete random variables, however generalising our results to the continuous case is straightforward.

The $j$-th node of $\mathcal{G}$ is associated with the random variable $x_j$ for $j=1,\ldots,M$, and the edge connecting nodes $j$ and $k$ represents the statistical interaction between $x_j$ and $x_k$. By the Hammersley-Clifford Theorem  \cite{Hammersley:1971},   
the random vector $\vx$ satisfies the Markov property with respect to the graph ${\mathcal{G}}$, \emph{i.e.}\ 
$p(x_{j} | \vx_{-j}) = p(x_{j} | \vx_{\mathcal{N}(x_j)})$ for all $j$
where $\vx_{-j}$ denotes all variables in $\vx$ excluding $x_j$, and $\vx_{\mathcal{N}(x_j)}$ are the variables in the neighbourhood of node $j$ (variables associated with nodes in $\mathcal{G}$ directly connected to node $j$). 


\subsection{Centralised estimation}

The standard approach to parameter estimation in statistics is through maximum likelihood, which chooses parameters $\vtheta$ by maximising
\begin{align}
\mathcal{L}^{ML}(\vtheta) = \prod_{n=1}^N p(\vx_n\,|\,\vtheta)
\end{align}
(To keep the notation light, we reserve $n$ to index the data samples. In particular, $\vx_n$ denotes the $n$-th $|\mathcal{V}|$-dimensional data vector and $x_{mn}$ refers to the $n$-th observation of node $m$.)

This estimator has played a central role in statistics as it is consistent, asymptotically normal, and efficient, among other desirable properties.  However, applying maximum likelihood estimation to an MRF is generally intractable since computing the value of $\log\mathcal{L}^{ML}$ and its derivative require evaluating the partition function, or an expectation over the model respectively.  Both of these values involve a sum over exponentially many terms.

To surmount this difficulty it is common to approximate $p(\vx\,|\,\vtheta)$ as a product over more tractable terms.  This approach is known as \keywordDef{composite likelihood} and leads to an objective of the form
\begin{align}
\mathcal{L}^{CL}(\vtheta) = \prod_{n=1}^N \prod_{i=1}^I f^i(\vx_n, \vtheta^i)
\label{eq:composite-likelihood}
\end{align}
where $\vtheta^i$ denote the (possibly shared) parameters of each composite likelihood factor $f^i$.

Composite likelihood estimators are and both well studied and widely applied~\cite{Lindsay:1988,Mardia:2009,Liang:2008,Dillon:2010,Marlin:2010,Asuncion:2010,Okabayashi:2011,Bradley:2012,Nowozin:2013}.  In practice the $f^i$ terms are chosen to be easy to compute, and are typically local functions, depending only on some local region of the underlying graph $\mathcal{G}$.

An early and influential variant of composite likelihood is \keywordDef{pseudo-likelihood} (PL)~\cite{Besag:1974}, where $f^i(\vx, \vtheta^i)$ is chosen to be the conditional distribution of $x_i$ given its neighbours,
\begin{align}
\mathcal{L}^{PL}(\vtheta) = \prod_{n=1}^N\prod_{m=1}^M p(x_{mn}\,|\,\vx_{\mathcal{N}(x_m)n},\vtheta^m)
\label{eq:pseudo-likelihood}
\end{align}
Since the joint distribution has a Markov structure with respect to the graph $\mathcal{G}$, the conditional distribution for $x_{m}$ depends only on its neighbours, namely $\vx_{\mathcal{N}(x_m)}$.  In general more efficient composite likelihood estimators can be obtained by blocking, \emph{i.e.}\ choosing the $f^i(\vx, \vtheta^i)$ to be conditional or marginal likelihoods over blocks of variables, which may be allowed to overlap.

Composite likelihood estimators are often divided into conditional and marginal variants, depending on whether the $f^i(\vx, \vtheta^i)$ are formed from conditional or marginal likelihoods.  In machine learning the conditional variant is quite popular~\cite{Liang:2008,Dillon:2010,Marlin:2010,Marlin:2011,Bradley:2012} while the marginal variant has received less attention.  In statistics, both the marginal and conditional variants of composite likelihood are well studied (see the comprehensive review of Varin \emph{et.\ al.}~\cite{Varin:2011}).


An unfortunate difficulty with composite likelihood is that the estimators cannot be computed in parallel, since elements of $\vtheta$ are often shared between the different factors.  For a fixed value of $\vtheta$ the terms of $\log\mathcal{L}^{CL}$ decouple over data and over blocks of the decomposition; however, if $\vtheta$ is not fixed then the terms remain coupled.


\subsection{Consensus estimation}
\label{sec:consensus}


Seeking greater parallelism, researchers have investigated methods for decoupling the sub-problems in composite likelihood.  This leads to the class of \keywordDef{consensus estimators}, which perform parameter estimation independently in each composite likelihood factor.  This approach results in parameters that are shared between factors being estimated multiple times, and a final consensus step is required to force agreement between the solutions from separate sub-problems~\cite{Wiesel:2012,Liu:2012}.

Centralised estimators enforce sub-problem agreement throughout the estimation process, requiring many rounds of communication in a distributed setting.  Consensus estimators allow sub-problems to disagree during optimisation, enforcing agreement as a post-processing step which requires only a single round of communication.

Liu and Ihler~\cite{Liu:2012} approach distributed composite likelihood by optimising each  term separately
\begin{align}
\hat{\vtheta}^i_{\beta_i} = \argmax_{\vtheta_{\beta_i}} \left(\prod_{n=1}^N f^i(\vx_{\mathcal{A}^i,n}, \vtheta_{\beta_i}) \right)
\label{eq:liu-ihler-decomposition}
\end{align}
where $\mathcal{A}^i$ denotes the group of variables associated with block $i$, and $\vtheta_{\beta_i}$  is the corresponding set of parameters.  In this setting the sets $\beta_i \subseteq \V$ are allowed to overlap, but the optimisations are carried out independently, so multiple estimates for overlapping parameters are obtained.  Following Liu and Ihler we have used the notation $\vtheta^i = \vtheta_{\beta_i}$ to make this interdependence between factors explicit.

The analysis of this setting proceeds by embedding each local estimator $\hat{\vtheta}^i_{\beta_i}$ into a degenerate estimator $\hat{\vtheta}^i$ for the global parameter vector $\vtheta$ by setting $\hat{\vtheta}^i_c = 0$ for $c \notin \beta_i$.  The degenerate estimators are  combined into a single non-degenerate global estimate using different consensus operators, \emph{e.g.}\ weighted averages of the $\hat{\vtheta}^i$.

The analysis of Liu and Ihler assumes that for each sub-problem $i$ and for each $c\in\beta_i$
\begin{align}
(\hat{\vtheta}^i_{\beta_i})_c \stackrel{p}{\to} \vtheta_c
\label{eq:local-global-convergence}
\end{align}
\emph{i.e.}\ that each local estimate for the parameter of clique $c$ is a consistent estimator of the corresponding clique parameter in the joint distribution.  This assumption does not hold in general, and one of the contributions of this work is to give a general condition under which this assumption holds.

The analysis of Liu and Ihler~\cite{Liu:2012} considers the case where the local estimators in Equation~\ref{eq:liu-ihler-decomposition} are arbitrary $M$-estimators \cite{vanderVaart:1998}, however their experiments address only the case of pseudo-likelihood.  In Section~\ref{sec:clap} we prove that the factorisation used by pseudo-likelihood satisfies Equation~\ref{eq:local-global-convergence}, explaining the good results in their experiments.

\subsection{Distributed estimation}
\label{sec:lap}

Consensus estimation dramatically increases the parallelism of composite likelihood estimates by relaxing the requirements on enforcing agreement between coupled sub-problems.  Recently Mizrahi \emph{et.\ al.}~\cite{Mizrahi:2014} have shown that if the composite likelihood factorisation is constructed correctly then consistent parameter estimates can be obtained without requiring a consensus step.

In the LAP algorithm of Mizrahi \emph{et~al.}~\cite{Mizrahi:2014} the domain of each composite likelihood factor (which they call the \keywordDef{auxiliary MRF}) is constructed by surrounding each maximal clique $q$ with the variables in its \emph{1-neighbourhood}
\begin{align*}
\mathcal{A}_q = \bigcup_{c \cap q \neq \emptyset} c
\end{align*}
which contains all of the variables of $q$ itself as well as the variables with at least one neighbour in $q$; see Figure~\ref{fig:example} for an example. For MRFs of low degree the sets $\mathcal{A}_q$ are small, and consequently maximum likelihood estimates for parameters of MRFs over these sets can be obtained efficiently.  The parametric form of each factor in LAP is chosen to coincide with the marginal distribution over $\mathcal{A}_q$.

The factorisation of Mizrahi \emph{et~al.}\ is essentially the same as in Equation~\ref{eq:liu-ihler-decomposition}, but the domain of each term is carefully selected, and the LAP theorems are proved only for the case where $f^i(\vx_{\mathcal{A}_q}, \vtheta_{\beta_q}) = p(\vx_{\mathcal{A}_q}, \vtheta_{\beta_q})$.

As in consensus estimation, parameter estimation in LAP is performed separately and in parallel for each term; however, agreement between sub-problems is handled differently.  Instead of combining parameter estimates from different sub-problems, LAP designates a specific sub-problem as authoritative for each parameter (in particular the sub-problem with domain $\mathcal{A}_q$ is authoritative for the parameter $\vtheta_q$).  The global solution is constructed by collecting parameters from each sub-problem for which it is authoritative and discarding the rest.


In order to obtain consistency for LAP, Mizrahi \emph{et~al.}~\cite{Mizrahi:2014} assume that both the joint distribution and each composite likelihood factor are parametrised using normalized potentials. 
\begin{definition}
A Gibbs potential $E(\vx_c|\vtheta_c)$ is said to be normalised with respect to zero if $E(\vx_c|\vtheta_c) = 0$ whenever there exists $t \in c$ such that $\vx_{t} = 0$.
\end{definition}
A perhaps under-appreciated existence and uniqueness theorem \cite{Griffeath:1976,Bremaud:2001} for MRFs states that there exists one and only one potential normalized with respect to zero corresponding to a Gibbs distribution.  This result ensures a one to one correspondence between Gibbs distributions and normalised potential representations of an MRF.

The consistency of LAP relies on the following observation.  Suppose we have a Gibbs distribution $p(\vx_\V\,|\,\vtheta)$ that factors according to the clique system $\mathcal{C}$, and suppose that the parametrisation is chosen so that the potentials are normalised with respect to zero.  For a particular clique of interest $q$, the marginal over $\vx_{\mathcal{A}_q}$ can be obtained as follows
\begin{align}
p(\vx_{\mathcal{A}_q}\,|\,\vtheta) &= \sum_{\vx_{\V\setminus \fq}} p(\vx_\V\,|\,\vtheta)
= \frac{1}{Z(\vtheta)} \sum_{\vx_{\V\setminus \fq}}\exp ( -\sum_{c\in\mathcal{C}} E(\vx_c \,|\, \vtheta_c) )
\nonumber \\
&= \frac{1}{Z(\vtheta)}\exp ( -E(\vx_q\,|\, \vtheta_q) - \sum_{c\in\mathcal{C}_q \setminus \{q\}}   E(\vx_c\,|\, \vtheta_{\V\setminus q} ) )
\label{eq:marginalFq}
\end{align}
where $\mathcal{C}_q$ denotes the clique system of the marginal, which in general includes cliques not present in the joint.  The same distribution can also be written in terms of different parameters $\valpha$
\begin{align}
p(\vx_{\fq}\,|\, \valpha) &= \frac{1}{Z(\valpha)}\exp (- E(\vx_q\,|\, \valpha_q) - \sum_{c\in\mathcal{C}_q\setminus\{q\}} E(\vx_c\,|\, \valpha_c) )
\label{eq:marginalFqalpha}
\end{align}
which are also assumed to be normalised with respect to zero.  As shown in Mizrahi \emph{et.\ al.}~\cite{Mizrahi:2014}, the uniqueness of normalised potentials can be used to obtain the following result.

\begin{proposition}[\textbf{LAP argument} \cite{Mizrahi:2014}]
If the parametrisations of $p(\vx_\V\,|\,\vtheta)$ and $p(\vx_{\fq}\,|\, \valpha)$ are chosen to be normalized with respect to zero, and if the parameters are identifiable with respect to the potentials, then $\vtheta_q = \valpha_q$. 
\end{proposition}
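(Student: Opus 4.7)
The plan is to invoke the existence and uniqueness theorem of Griffeath and Bremaud cited just above the proposition: every Gibbs distribution admits a unique representation in potentials normalised with respect to zero, relative to a given clique system. The strategy is therefore to exhibit two such representations of the same marginal distribution $p(\vx_\fq)$ and conclude they must agree term-by-term, in particular on the clique $q$.

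First I would observe that by construction of $\valpha$ (it parametrises the marginal of the joint over $\vx_\fq$), the expressions in Equation~\ref{eq:marginalFq} and Equation~\ref{eq:marginalFqalpha} define the same probability mass function on $\vx_\fq$. Both are Gibbs with respect to the marginal clique system $\mathcal{C}_q$, which contains $q$ together with the additional cliques introduced by summing out $\vx_{\V\setminus\fq}$ (e.g.\ the dashed edges in Figure~\ref{fig:example}, centre-left).

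Next I would verify that both sides are written in zero-normalised form. Equation~\ref{eq:marginalFqalpha} is normalised by assumption. For Equation~\ref{eq:marginalFq}, the clique-$q$ term $E(\vx_q\,|\,\vtheta_q)$ inherits normalisation from the joint, and since $q\subset\fq$ the summation over $\vx_{\V\setminus\fq}$ leaves this term untouched; the remaining contributions depend on variables lying in cliques $c\in\mathcal{C}_q\setminus\{q\}$ and can be reorganised (by a standard Möbius-type rearrangement on the lattice of subsets of $\fq$) into zero-normalised potentials supported on those cliques alone, without perturbing the potential on $q$. Applying the uniqueness theorem, the normalised potentials of the two representations must coincide clique-by-clique; in particular $E(\vx_q\,|\,\vtheta_q) = E(\vx_q\,|\,\valpha_q)$ as functions of $\vx_q$. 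Identifiability of the parametrisation with respect to its potential then yields $\vtheta_q = \valpha_q$.

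The main obstacle is the second step: justifying that the marginalisation in Equation~\ref{eq:marginalFq} indeed produces (or can be reorganised into) a zero-normalised representation on $\mathcal{C}_q$ in which the clique-$q$ potential remains exactly $E(\vx_q\,|\,\vtheta_q)$. This requires checking that when one collects all terms of the marginal log-density that depend only on $\vx_q$ and then normalises them at zero, no additional $\vx_q$-only contribution is produced by the summed-out terms. The key facts making this work are that (i) $q$ is disjoint from $\V\setminus\fq$, so the original $E(\vx_q\,|\,\vtheta_q)$ factor is untouched by the sum, and (ii) the remaining summed contributions, once zero-normalised, assign zero to any configuration with a coordinate outside $q$ set to zero, hence contribute nothing to the clique-$q$ component under the Möbius expansion. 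Once this bookkeeping is in place, uniqueness and identifiability close the argument mechanically.
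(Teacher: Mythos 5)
There is a genuine gap, and it sits exactly where you locate the ``main obstacle.'' Your fact (ii) --- that the summed-out contribution, once zero-normalised, produces no potential on the clique $q$ --- is the entire nontrivial content of the proposition, and you assert it rather than prove it. As stated it is not even self-justifying: a potential normalised with respect to zero vanishes when a coordinate \emph{of its own support} is zero, so your claim that the induced terms ``assign zero to any configuration with a coordinate outside $q$ set to zero'' already presupposes that every normalised potential arising from the marginalisation term is supported on a set containing at least one node outside $q$ (equivalently, that no potential on exactly $q$ is created). Nothing in your argument establishes this, and indeed nowhere do you use the defining property of $\fq$ as the 1-neighbourhood of $q$. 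Without that property the statement is false: if $\mathcal{A}$ is an arbitrary superset of $q$ and two nodes of $q$ are connected by a path whose interior lies outside $\mathcal{A}$, marginalisation creates a new potential on $q$ (cf.\ the induced graph of Equation~\ref{eq:induced-graph} and Corollary~\ref{cor:induced-cliques}), the zero-normalised $q$-potential of the marginal differs from $E(\vx_q\,|\,\vtheta_q)$, and $\vtheta_q \neq \valpha_q$ in general. So your fact (i) (that $E(\vx_q\,|\,\vtheta_q)$ itself is untouched by the sum) is not enough; you must also rule out an \emph{additional} $q$-supported term coming from $\sum_{\vx_{\V\setminus\fq}}\exp(-\sum_{c\in\mathcal{C}^{out}_{\fq}}E(\vx_c\,|\,\vtheta_c))$.

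The missing step is short once you invoke the structure of $\fq$. Because $\fq$ contains every neighbour of every node of $q$, no clique in $\mathcal{C}^{out}_{\fq}$ (a clique not contained in $\fq$) can contain a node of $q$; hence the marginalisation term is a function of $\vx_{\fq\setminus q}$ only, its Möbius/zero-normalised decomposition involves only subsets of $\fq\setminus q$, and in particular it contributes nothing to the $q$-component. With that in place, your uniqueness-plus-identifiability conclusion goes through. This is exactly the role played in the paper's framework by Proposition~\ref{prop:strong-lap-functions} via Corollary~\ref{cor:induced-cliques}: one checks that the induced MRF on $\fq$ contains no potential for $q$ (for the 1-neighbourhood, every pair in $q$ is path-disconnected with respect to $\V\setminus\fq$, so the Strong LAP Condition holds), and only then does matching normalised potentials yield $E(\vx_q\,|\,\vtheta_q)=E(\vx_q\,|\,\valpha_q)$ and $\vtheta_q=\valpha_q$. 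Your write-up takes the same uniqueness-based route as the paper but skips this graph-structural verification, which is the step that actually distinguishes valid domains $\fq$ from invalid ones.
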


This proposition enables Mizrahi \emph{et.\ al.}~\cite{Mizrahi:2014} to obtain consistency for LAP under the standard smoothness and identifiability assumptions for MRFs \cite{Fienberg:2012}.

\section{Contributions of this paper}

The strength of the results of Mizrahi \emph{et~al.}~\cite{Mizrahi:2014} is to show that it is possible to perform parameter estimation in a completely distributed way without sacrificing global consistency.  They prove that through careful design of a composite likelihood factorisation it is possible to obtain estimates for each parameter of the joint distribution in isolation, without requiring even a final consensus step to enforce sub-problem agreement.  Their weakness is that the LAP algorithm is very restrictive, requiring a specific composite likelihood factorisation.

The strength of the results of Liu and Ihler~\cite{Liu:2012} is that they apply in a very general setting (arbitrary $M$-estimators) and make no assumptions about the underlying structure of the MRF.  On the other hand they assume the convergence in Equation~\ref{eq:local-global-convergence}, and do not characterise the conditions under which this assumption holds. 

The key to unifying these works is to notice that the specific decomposition used in LAP is chosen essentially to ensure the convergence of Equation~\ref{eq:local-global-convergence}.  This leads to our development of the Strong LAP Condition and an associated Strong LAP Argument, which is a drop in replacement for the LAP argument of Mizrahi \emph{et~al.}\ and holds for a much larger range of composite likelihood factorisations than their original proof allows.

Since the purpose of the Strong LAP Condition is to guarantee the convergence of Equation~\ref{eq:local-global-convergence}, we are able to import the results of Liu and Ihler~\cite{Liu:2012} into the LAP framework directly, bridging the gap between LAP and consensus estimators.  The same Strong LAP Condition also provides the necessary convergence guarantee for the results of Liu and Ihler to apply.

Finally we show how the Strong LAP Condition can lead to the development of new estimators, by developing a new distributed estimator which subsumes the distributed pseudo-likelihood and gives estimates that are both consistent and asymptotically normal.



\section{Strong LAP argument}

In this section we present the Strong LAP Condition, which provides a general condition under which the convergence of Equation~\ref{eq:local-global-convergence} holds.  This turns out to be intimately connected to the structure of the underlying graph.

\begin{definition}[Relative Path Connectivity]
\label{path_con}
Let $\mathcal{G}=(\mathcal{V},\mathcal{E})$ be an undirected graph, and let $\mathcal{A}$ be a given subset of $\V$. We say that  two nodes $i, j \in \mathcal{A}$ are \emph{path connected with respect to $\V \setminus \mathcal{A}$} if there exists a path $\mathcal{P} = \{i,s_{1},s_{2},\ldots,s_{n},j \} \neq \{i,j\}$ with none of the $s_k \in \mathcal{A}$. Otherwise, we say that $i,j$ are path disconnected with respect to $\V \setminus \mathcal{A}$.
\end{definition}

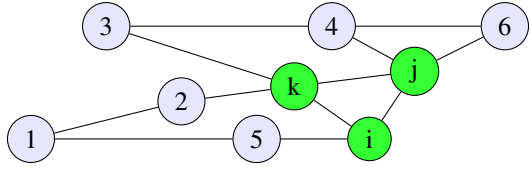
\begin{figure}[!hz]
\centering
\begin{tikzpicture}

\draw (0.5,0.5) node [circle,radius=6mm ,draw, fill=green!80](ni) {i};
\draw (1.1,1.4) node [circle,radius=6mm ,draw, fill=green!80](nj) {j};
\draw (-0.5,1.2) node [circle,radius=6mm ,draw, fill=green!80](nk) {k};

\draw (-4,0.5) node [circle,radius=6mm ,draw, fill=blue!10](n1) {1};
\draw (-2,1) node [circle,radius=6mm ,draw, fill=blue!10](n2) {2};
\draw (-3,2) node [circle,radius=6mm ,draw, fill=blue!10](n3) {3};
\draw (0,2) node [circle,radius=6mm ,draw, fill=blue!10](n4) {4};
\draw (-1,0.5) node [circle,radius=6mm ,draw, fill=blue!10](n5) {5};
\draw (2.3,2) node [circle,radius=6mm ,draw, fill=blue!10](n6) {6};

 \foreach \from/\to in {ni/nj,ni/nk,nj/nk,nj/n6,nj/n4,n6/n4,ni/n5,n5/n1,n2/n1,n2/nk,nk/n3,n3/n4}
    \draw (\from) -- (\to);  

\end{tikzpicture}
\caption{Illustrating the concept of relative path connectivity. Here, $\mathcal{A} = \{i,j,k\}$. While $(k,j)$ are path connected via $\{3,4\}$ and $(k,i)$ are path connected via $\{2,1,5\}$, the pair $(i,j)$ are path disconnected with respect to $\V\setminus\mathcal{A}$.}
\label{path_connected_example}
\end{figure}

For a given $\mathcal{A} \subseteq \mathcal{V}$ we partition the clique system of $\mathcal{G}$ into two parts, $\mathcal{C}_{\mathcal{A}}^{in}$ that contains all of the cliques that are a subset of $\mathcal{A}$, and $\mathcal{C}_{\mathcal{A}}^{out} = \mathcal{C}\setminus\mathcal{C}_{\mathcal{A}}^{in}$ that contains the remaining cliques of $\mathcal{G}$.  Using this notation we can write the marginal distribution over $\vx_\mathcal{A}$ as
\begin{align}
p(\vx_{\mathcal{A}}\,|\,\vtheta) 
=  \frac{1}{Z(\vtheta)} \exp (-\sum_{c \in \mathcal{C}_{\mathcal{A}}^{in}} E(\vx_c\,|\,\vtheta_c))  \sum_{\vx_{\mathcal{V}\setminus\mathcal{A}}} \exp (-\sum_{c\in\mathcal{C}_{\mathcal{A}}^{out}} E(\vx_c\,|\,\vtheta_c))
\label{eq:induced-graph}
\end{align}

Up to a normalisation constant, $\sum_{\vx_{\mathcal{V}\setminus\mathcal{A}}} \exp (-\sum_{c\in\mathcal{C}_{\mathcal{A}}^{out}} E(\vx_c\,|\,\vtheta_c))$ induces a Gibbs density (and therefore an MRF) on $\mathcal{A}$, which we refer to as the \keywordDef{induced MRF}. 
(For example, as illustrated in Figure~\ref{fig:example} centre-left, the induced MRF involves all the cliques over the nodes 4, 5 and 9.) 
 By the Hammersley-Clifford theorem this MRF has a corresponding graph which we refer to as the \keywordDef{induced graph} and denote $\mathcal{G}_\mathcal{A}$.  Note that the induced graph does not have the same structure as the marginal, it contains only edges which are created by summing over $\vx_{\mathcal{V}\setminus\mathcal{A}}$.

\begin{remark}
To work in the general case, we assume throughout that that if an MRF contains the path $\{i,j,k\}$ then summing over $j$ creates the edge $(i,k)$ in the marginal.
\end{remark}

\begin{proposition}
Let $\mathcal{A}$ be a subset of $\mathcal{V}$, and let $i,j \in \mathcal{A}$.  The edge $(i,j)$ exists in the induced graph $\mathcal{G}_\mathcal{A}$ if and only if $i$ and $j$ are path connected with respect to $\mathcal{V}\setminus\mathcal{A}$.
\end{proposition}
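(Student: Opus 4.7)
The statement is an iff, and the two directions are naturally handled by quite different arguments, so I would split the proof accordingly. The forward direction (path connected implies edge) is a direct application of the remark; the reverse direction (edge implies path connected) is best proved by contrapositive, via a factorisation of the marginalising sum over connected components of $\mathcal{V}\setminus\mathcal{A}$ in $\mathcal{G}$.

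\textbf{Forward direction.} Suppose a path $\mathcal{P}=\{i,s_{1},\ldots,s_{n},j\}$ exists with every $s_k\in\mathcal{V}\setminus\mathcal{A}$. Every consecutive pair in $\mathcal{P}$ is an edge of $\mathcal{G}$, and any clique containing such an edge contains some $s_k\notin\mathcal{A}$, so it lies in $\mathcal{C}_{\mathcal{A}}^{out}$ and contributes to the sum in \eqref{eq:induced-graph}. Starting from the fragment $\{s_{n-1},s_{n},j\}$, the remark tells us that summing out $s_n$ creates the edge $(s_{n-1},j)$ in the marginal; proceeding by induction on the path length and summing out $s_{n-1},\ldots,s_{1}$ in turn yields the edge $(i,j)$ in $\mathcal{G}_{\mathcal{A}}$.

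\textbf{Reverse direction (contrapositive).} Assume $i$ and $j$ are not path connected relative to $\mathcal{V}\setminus\mathcal{A}$. Let $C_{1},\ldots,C_{m}$ be the connected components of the subgraph of $\mathcal{G}$ induced by $\mathcal{V}\setminus\mathcal{A}$, and for each component let $B_{k}\subseteq\mathcal{A}$ be the set of nodes in $\mathcal{A}$ adjacent in $\mathcal{G}$ to some node of $C_{k}$. Every clique $c\in\mathcal{C}_{\mathcal{A}}^{out}$ contains at least one vertex of $\mathcal{V}\setminus\mathcal{A}$, and because $c$ is a clique all such vertices are pairwise adjacent, hence lie in a single component $C_{k(c)}$. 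This yields a partition $\mathcal{C}_{\mathcal{A}}^{out}=\bigsqcup_{k}\mathcal{C}_{\mathcal{A},k}^{out}$ with each $c\in\mathcal{C}_{\mathcal{A},k}^{out}$ supported in $C_{k}\cup B_{k}$. Consequently the marginalising sum factors as
$$
\sum_{\vx_{\mathcal{V}\setminus\mathcal{A}}}\exp\Bigl(-\sum_{c\in\mathcal{C}_{\mathcal{A}}^{out}}E(\vx_{c}\,|\,\vtheta_{c})\Bigr) \;=\; \prod_{k}\,\sum_{\vx_{C_{k}}}\exp\Bigl(-\sum_{c\in\mathcal{C}_{\mathcal{A},k}^{out}}E(\vx_{c}\,|\,\vtheta_{c})\Bigr) \;=:\; \prod_{k}\psi_{k}(\vx_{B_{k}}).
$$
Each factor $\psi_{k}$ depends only on $\vx_{B_{k}}$, so the potentials of the induced MRF are supported inside the boundary sets $B_{k}$. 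The non-existence of a path through $\mathcal{V}\setminus\mathcal{A}$ implies that $i$ and $j$ cannot both belong to any single $B_{k}$, so no induced potential jointly involves $x_{i}$ and $x_{j}$ and the edge $(i,j)$ is absent from $\mathcal{G}_{\mathcal{A}}$.

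\textbf{Main obstacle.} The forward direction is essentially bookkeeping on top of the remark. The real content lives in the reverse direction, where the delicate step is arguing that each $c\in\mathcal{C}_{\mathcal{A}}^{out}$ is cleanly assigned to a unique connected component $C_{k}$ so that the sum factorises; this uses the fact that $c\cap(\mathcal{V}\setminus\mathcal{A})$ is a clique and therefore lies in one component. A small subtlety to keep in mind is that the biconditional characterises edges of the \emph{induced} MRF only (the part coming from $\mathcal{C}_{\mathcal{A}}^{out}$), not edges of the full marginal, which additionally contains the $\mathcal{C}_{\mathcal{A}}^{in}$ structure; the statement and factorisation already respect this, so no extra work is needed.
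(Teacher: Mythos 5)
Your proposal is correct. The forward direction is essentially the paper's own argument: walk along the path and apply the Remark inductively, summing out $s_n,\ldots,s_1$ to create the edge $(i,j)$. The reverse direction, however, takes a genuinely different route. The paper dispatches it with a short induction-by-contradiction: it asserts that if the edge $(i,j)$ is formed by summing over $s_1,\ldots,s_n$, then $i$ and $j$ must be path connected via $\{i,s_1,\ldots,s_n,j\}$ — an argument phrased entirely in the informal ``summation creates edges'' language of the Remark, and which implicitly assumes an edge can only arise from a chain of pairwise summations along a path. Your component-wise factorisation makes this precise: since each $c\in\mathcal{C}_{\mathcal{A}}^{out}$ has $c\setminus\mathcal{A}$ a clique, it lies in a single connected component $C_{k}$ of $\mathcal{V}\setminus\mathcal{A}$ and $c\cap\mathcal{A}\subseteq B_{k}$, so the marginalising sum splits as $\prod_k \psi_k(\vx_{B_k})$, and path disconnection of $i,j$ prevents them from lying in a common boundary set $B_k$, hence no induced potential (and no induced edge) couples them. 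What each approach buys: the paper's argument is shorter and symmetric with the forward direction; yours is more rigorous and, notably, does not rely on the genericity assumption of the Remark for this direction — it shows the edge cannot appear at all, rather than merely that it generically does not — at the cost of the extra machinery of components and boundary sets. Your closing caveat (the statement concerns the induced MRF coming from $\mathcal{C}_{\mathcal{A}}^{out}$ only, not the full marginal) is also consistent with how the paper uses the result in Corollary~\ref{cor:induced-cliques} and Proposition~\ref{prop:strong-lap-functions}.
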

\vspace{-4mm}
\begin{proof}
If $i$ and $j$ are path connected then there is a path $\mathcal{P} = \{i,s_{1},s_{2},\ldots,s_{n},j \} \neq \{i,j\}$ with none of the $s_k \in \mathcal{A}$.  Summing over $s_k$ forms an edge $(s_{k-1}, s_{k+1})$.  By induction, summing over $s_1, \ldots, s_n$ forms the edge $(i,j)$.

If $i$ and $j$ are path disconnected with respect to $\mathcal{V}\setminus\mathcal{A}$ then summing over any $s \in \mathcal{V}\setminus\mathcal{A}$ cannot form the edge $(i,j)$ or $i$ and $j$ would be path connected through the path $\{i,s,j\}$.  By induction, if the edge $(i,j)$ is formed by summing over $s_1,\ldots,s_n$ this implies that $i$ and $j$ are path connected via $\{i,s_1,\ldots,s_n,j\}$, contradicting the assumption.
\end{proof}

\begin{corollary}
$\mathcal{B}\subseteq\mathcal{A}$ is a clique in the induced graph $\mathcal{G}_\mathcal{A}$ if and only if all pairs of nodes in $\mathcal{B}$ are path connected with respect to $\mathcal{V}\setminus\mathcal{A}$.
\label{cor:induced-cliques}
\end{corollary}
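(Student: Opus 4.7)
The plan is to reduce this corollary directly to the preceding proposition, since a clique is by definition a set of pairwise-adjacent vertices, and the proposition already characterises adjacency in $\mathcal{G}_\mathcal{A}$ in terms of relative path connectivity. So the corollary should fall out by quantifying the proposition over all pairs in $\mathcal{B}$.

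Concretely, I would argue both directions as follows. For the forward direction, assume $\mathcal{B}\subseteq\mathcal{A}$ is a clique in $\mathcal{G}_\mathcal{A}$. Then for every pair $i,j\in\mathcal{B}$ with $i\neq j$, the edge $(i,j)$ is present in $\mathcal{G}_\mathcal{A}$, and the previous proposition immediately gives that $i$ and $j$ are path connected with respect to $\V\setminus\mathcal{A}$. For the reverse direction, assume every pair $i,j\in\mathcal{B}$ is path connected with respect to $\V\setminus\mathcal{A}$; applying the proposition pair by pair produces the edge $(i,j)$ in $\mathcal{G}_\mathcal{A}$ for every such pair, so $\mathcal{B}$ induces a complete subgraph and is therefore a clique.

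There is essentially no obstacle here; the corollary is a straightforward quantification of the proposition. The only small subtlety worth flagging is the degenerate case $|\mathcal{B}|\le 1$, where the statement is vacuous (a singleton or empty set is trivially a clique, and the "all pairs" condition is vacuously satisfied), so no separate argument is needed. I would therefore keep the proof to essentially two sentences, invoking the proposition in each direction.
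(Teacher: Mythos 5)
Your proof is correct and follows exactly the route the paper intends: the paper states this corollary without a separate proof, treating it as an immediate pairwise application of the preceding proposition, which is precisely your argument.
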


\begin{definition}[Strong LAP condition]
Let $\mathcal{G} = (\mathcal{V}, \mathcal{E})$ be an undirected graph and let $q \in \mathcal{C}$ be a clique of interest.  We say that a set $\mathcal{A}$ such that $q \subseteq \mathcal{A} \subseteq \mathcal{V}$ satisfies the strong LAP condition for $q$ if there exist $i,j \in q$ such that $i$ and $j$ are path-disconnected with respect to $\mathcal{V}\setminus\mathcal{A}$.
\end{definition}

\begin{proposition}
Let $\mathcal{G} = (\mathcal{V}, \mathcal{E})$ be an undirected graph and let $q \in \mathcal{C}$ be a clique of interest.  The joint distribution $p(\vx_{\mathcal{V}}\,|\,\vtheta)$ and the marginal $p(\vx_{\mathcal{A}_q}\,|\,\vtheta)$ share the same normalised potential for $q$ if $\mathcal{A}_q$ satisfies the strong LAP condition for $q$.
\label{prop:strong-lap-functions}
\end{proposition}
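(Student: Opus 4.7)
The plan is to isolate the normalised potential for $q$ in the marginal $p(\vx_{\mathcal{A}_q}\,|\,\vtheta)$ by combining the decomposition in Equation~\ref{eq:induced-graph} with Hammersley-Clifford applied to the induced MRF, and then invoking the uniqueness of normalised potentials. Concretely, I would begin by writing the marginal in the form of Equation~\ref{eq:induced-graph}, so that the induced factor $h(\vx_{\mathcal{A}_q}) = \sum_{\vx_{\V\setminus\mathcal{A}_q}}\exp\bigl(-\sum_{c\in\mathcal{C}_{\mathcal{A}_q}^{out}} E(\vx_c\,|\,\vtheta_c)\bigr)$ is separated from the original within-$\mathcal{A}_q$ potentials. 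Up to normalisation, $h$ is a Gibbs density on $\mathcal{A}_q$ whose MRF graph is the induced graph $\mathcal{G}_{\mathcal{A}_q}$, so by Hammersley-Clifford together with the uniqueness theorem for normalised potentials cited earlier in the paper, $-\log h$ admits a unique decomposition $\sum_{c'}E^{\mathrm{ind}}_{c'}(\vx_{c'})$ into normalised potentials, one per clique $c'$ of $\mathcal{G}_{\mathcal{A}_q}$.

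The crux is to rule out any induced term depending on all coordinates of $\vx_q$. By the Strong LAP Condition there exist $i,j\in q$ that are path-disconnected with respect to $\V\setminus\mathcal{A}_q$, and by the preceding proposition the edge $(i,j)$ is absent from $\mathcal{G}_{\mathcal{A}_q}$. Hence $q$ is not a clique of $\mathcal{G}_{\mathcal{A}_q}$, and since every subset of a clique is a clique, no $c'\supseteq q$ can be a clique of $\mathcal{G}_{\mathcal{A}_q}$ either; Corollary~\ref{cor:induced-cliques} supplies this directly. Thus no $E^{\mathrm{ind}}_{c'}$ depends jointly on all of $\vx_q$. Grouping $-\log p(\vx_{\mathcal{A}_q}\,|\,\vtheta)$ by clique, the only summand indexed by $q$ is the original $E(\vx_q\,|\,\vtheta_q)$ from $\mathcal{C}_{\mathcal{A}_q}^{in}$, which is already normalised by hypothesis. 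The uniqueness half of the normalised-potential representation applied to the marginal Gibbs distribution then forces its normalised $q$-potential to coincide with $E(\vx_q\,|\,\vtheta_q)$, matching the joint.

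The main obstacle I anticipate is the bookkeeping in the final uniqueness step: after collecting contributions by clique one must verify that the resulting family is a bona fide normalised-potential decomposition of the marginal (each summand normalised with respect to zero and supported on a clique of the marginal graph), so that the uniqueness theorem legitimately applies and no cross-clique cancellation can re-introduce a $q$-term. Once this is in place, the Strong LAP hypothesis precisely excludes any induced contribution on $q$, and the conclusion follows.
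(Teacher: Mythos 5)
Your proposal is correct and follows essentially the same route as the paper's own proof: write the marginal via Equation~\ref{eq:induced-graph}, use Corollary~\ref{cor:induced-cliques} (via the Strong LAP Condition) to conclude the induced MRF carries no potential on $q$, observe that $E(\vx_q\,|\,\vtheta_q)$ therefore appears unchanged in both the joint and the marginal, and finish by uniqueness of the normalised potential representation. Your extra bookkeeping—decomposing the induced factor into normalised potentials and checking the combined family is a legitimate normalised decomposition of the marginal—is a more careful spelling-out of what the paper leaves implicit, not a different argument.
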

\vspace{-4mm}
\begin{proof}
If $\mathcal{A}_q$ satisfies the Strong LAP Condition for $q$ then by Corollary~\ref{cor:induced-cliques} the induced MRF contains no potential for $q$. Inspection of Equation~\ref{eq:induced-graph} reveals that the same $E(\vx_q\,|\,\vtheta_q)$ appears as a potential in both the marginal and the joint distributions.  The result follows by uniqueness of the normalised potential representation.
\end{proof}

We now restrict our attention to a set $\mathcal{A}_q$ which satisfies the Strong LAP Condition for a clique of interest $q$.  The marginal over $p(\vx_{\mathcal{A}_q}\,|\,\vtheta)$ can be written as in Equation~\ref{eq:induced-graph} in terms of $\vtheta$, or in terms of auxiliary parameters $\valpha$ 
\begin{align}
p(\vx_{\fq}\,|\, \valpha) &= \frac{1}{Z(\valpha)}\exp (- \sum_{c\in\mathcal{C}_q} E(\vx_c\,|\, \valpha_c) )
\label{eq:general-auxiliary}
\end{align}
Where $\mathcal{C}_q$ is the clique system over the marginal.  We will assume both parametrisations are normalised with respect to zero.

\begin{theorem}[Strong LAP Argument]
Let $q$ be a clique in $\mathcal{G}$ and let $q \subseteq \fq \subseteq \V$.
Suppose $p(\vx_\V \,|\,\vtheta)$ and $p(\vx_{\fq}\,|\, \valpha)$ are parametrised so that their potentials are normalised with respect to zero and the parameters are identifiable with respect to the potentials.  If $\fq$ satisfies the Strong LAP Condition for $q$ then $\vtheta_q = \valpha_q$.
\label{thm:strong-lap-params}
\end{theorem}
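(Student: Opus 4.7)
The plan is to chain together Proposition~\ref{prop:strong-lap-functions} with the uniqueness of the normalised potential representation of a Gibbs density, and then invoke the identifiability hypothesis. The theorem is essentially a packaging lemma: the conceptual content has already been extracted in Corollary~\ref{cor:induced-cliques} and Proposition~\ref{prop:strong-lap-functions}, which show that the Strong LAP Condition prevents the induced MRF on $\fq$ from contributing any potential on the clique $q$.

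First, I would apply Proposition~\ref{prop:strong-lap-functions} directly. Since $\fq$ satisfies the Strong LAP Condition for $q$, the marginal $p(\vx_{\fq}\,|\,\vtheta)$, when written in the $\vtheta$-parametrisation inherited from the joint via Equation~\ref{eq:induced-graph}, admits a normalised-with-respect-to-zero potential decomposition in which the clique-$q$ potential is exactly $E(\vx_q\,|\,\vtheta_q)$, inherited unchanged from the joint. The point here is that $q \in \mathcal{C}_{\fq}^{in}$ while no clique in $\mathcal{C}_{\fq}^{out}$ touches $q$, and by Corollary~\ref{cor:induced-cliques} the induced MRF contributes no potential on $q$ either, so $E(\vx_q\,|\,\vtheta_q)$ is the sole contributor on $q$ in any normalised decomposition of the marginal.

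Second, I would observe that the auxiliary representation in Equation~\ref{eq:general-auxiliary} expresses the \emph{same} marginal distribution $p(\vx_{\fq})$ as a Gibbs density whose clique-$q$ potential is $E(\vx_q\,|\,\valpha_q)$, and this parametrisation is also normalised with respect to zero by hypothesis. Applying the existence-and-uniqueness theorem of Griffeath and Br\'emaud cited in the excerpt, the two normalised potential decompositions of the same Gibbs density must coincide clique by clique. In particular the clique-$q$ potentials agree as functions of $\vx_q$:
\begin{equation*}
E(\vx_q\,|\,\vtheta_q) = E(\vx_q\,|\,\valpha_q).
\end{equation*}

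Finally, the identifiability of the parameters with respect to the potentials forces $\vtheta_q = \valpha_q$, which is the desired conclusion. The only step where care is needed is the first one: ensuring that no clique generated by marginalising $\vx_{\V\setminus\fq}$ out of $p(\vx_\V\,|\,\vtheta)$ can coincide with $q$ itself, which is exactly what the Strong LAP Condition buys us through Corollary~\ref{cor:induced-cliques}. Everything else is bookkeeping.
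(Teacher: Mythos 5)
Your proposal is correct and follows essentially the same route as the paper's own proof: invoke Proposition~\ref{prop:strong-lap-functions} to identify the $q$-potential of the marginal with $E(\vx_q\,|\,\vtheta_q)$, use uniqueness of the normalised representation of the same marginal written in the $\valpha$-parametrisation of Equation~\ref{eq:general-auxiliary} to get $E(\vx_q\,|\,\vtheta_q) = E(\vx_q\,|\,\valpha_q)$, and conclude $\vtheta_q = \valpha_q$ by identifiability. The additional detail you give about the induced MRF contributing no potential on $q$ is just an unpacking of that proposition (via Corollary~\ref{cor:induced-cliques}), not a different argument.
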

\vspace{-4mm}
\begin{proof}
From Proposition~\ref{prop:strong-lap-functions} we know that $p(\vx_\mathcal{V}\,|\,\vtheta)$ and $p(\vx_{\Aq}\,|\,\vtheta)$ share the same clique potential for $q$.  Alternatively we can write the marginal distribution as in Equation~\ref{eq:general-auxiliary} in terms of auxiliary variables $\valpha$.  By uniqueness, both parametrisations must have the same normalised potentials.

Since the potentials are equal, we can match terms between the two parametrisations.  In particular since $E(\vx_q\,|\, \vtheta_q) = E(\vx_q\,|\,\valpha_q)$ we see that $\vtheta_q = \valpha_q$ by identifiability.
\end{proof}


\subsection{Efficiency and the choice of decomposition}

Theorem~\ref{thm:strong-lap-params} implies that distributed composite likelihood is consistent for a wide class of decompositions of the joint distribution; however it does not address the issue of statistical efficiency.

This question has been studied empirically in the work of Meng \emph{et.\ al.}~\cite{Meng:2013,Meng:2014}, who introduce a distributed algorithm for Gaussian random fields and consider neighbourhoods of different sizes.  Meng \emph{et.\ al.}\ find the larger neighbourhoods produce better empirical results and the following theorem confirms this observation.

\begin{theorem}
\label{bigger}
Let $\mathcal{A}$ be set of nodes which satisfies the Strong LAP Condition for $q$. Let $\hat{\theta}_{\mathcal{A}}$ be the ML parameter estimate of the marginal over $\mathcal{A}$. 
If $\mathcal{B}$ is a superset of $\mathcal{A}$, and $\hat{\theta}_{\mathcal{B}}$ 
is the ML parameter estimate of the marginal over $\mathcal{B}$. 
Then (asymptotically):
$$|  \theta_{q} - (\hat{\theta}_{\mathcal{B}})_q | \leq |  \theta_{q} - (\hat{\theta}_{\mathcal{A}})_q |.$$
\end{theorem}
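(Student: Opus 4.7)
My plan is to reduce the stated inequality to a comparison of asymptotic variances of two consistent ML estimators, and then establish that comparison via a Fisher-information (data-processing) monotonicity argument.

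\textbf{Step 1 (Consistency holds on both $\mathcal{A}$ and $\mathcal{B}$).} First I would verify that $\mathcal{B}$ inherits the Strong LAP Condition for $q$ from $\mathcal{A}$. Since $\mathcal{B}\supseteq\mathcal{A}$ we have $\mathcal{V}\setminus\mathcal{B}\subseteq\mathcal{V}\setminus\mathcal{A}$, so any path realising connectivity through $\mathcal{V}\setminus\mathcal{B}$ is a fortiori a path through $\mathcal{V}\setminus\mathcal{A}$. Hence the pair $i,j\in q$ that witnesses path-disconnection with respect to $\mathcal{V}\setminus\mathcal{A}$ also witnesses it with respect to $\mathcal{V}\setminus\mathcal{B}$. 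Theorem~\ref{thm:strong-lap-params} then applies to both marginals, so $(\hat{\theta}_\mathcal{A})_q \pconv \theta_q$ and $(\hat{\theta}_\mathcal{B})_q \pconv \theta_q$.

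\textbf{Step 2 (Asymptotic normality).} Standard MLE theory, under the smoothness/identifiability assumptions already in force in the paper, yields for $\mathcal{D}\in\{\mathcal{A},\mathcal{B}\}$ a Gaussian limit
\begin{equation*}
\sqrt{N}\bigl((\hat{\theta}_\mathcal{D})_q - \theta_q\bigr) \dconv \gauss(0,\sigma_\mathcal{D}^2),
\end{equation*}
where $\sigma_\mathcal{D}^2$ is the $(q,q)$-block of the inverse Fisher information of the marginal model on $\vx_\mathcal{D}$, profiled over the remaining (nuisance) parameters of that marginal.

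\textbf{Step 3 (Monotonicity of Fisher information --- the main obstacle).} The heart of the argument is showing $\sigma_\mathcal{B}^2 \le \sigma_\mathcal{A}^2$. The cleanest route is to embed the $\mathcal{A}$-estimator inside the $\mathcal{B}$-model: since $\vx_\mathcal{A}$ is a deterministic coordinate projection of $\vx_\mathcal{B}$, the estimator $(\hat{\theta}_\mathcal{A})_q$ is a (regular) function of the $\vx_\mathcal{B}$-sample. The data-processing inequality for Fisher information then gives, in the Loewner order on the full joint parameter $\vtheta$,
\begin{equation*}
I_\mathcal{B}(\vtheta) \succeq I_\mathcal{A}(\vtheta).
\end{equation*}
Taking the appropriate profile (partial) inverses with respect to $\theta_q$ reverses the inequality on the $(q,q)$-entry, yielding $\sigma_\mathcal{B}^2 \le \sigma_\mathcal{A}^2$. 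Equivalently, one can invoke asymptotic efficiency of the $\mathcal{B}$-MLE directly: any regular, consistent, asymptotically normal estimator of $\theta_q$ based on $\vx_\mathcal{B}$ (including the embedded $\mathcal{A}$-MLE) has asymptotic variance at least that of the efficient $\mathcal{B}$-MLE. The delicate point here is ensuring the nuisance parameters of the two marginal models are handled correctly; working with profile information (or with Le Cam's convolution theorem applied to the $\mathcal{B}$-model) avoids having to put the two marginal parameterisations on the same footing directly.

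\textbf{Step 4 (From variance to absolute deviation).} Finally, translate the variance comparison into the stated inequality. Because both $\sqrt{N}\bigl((\hat{\theta}_\mathcal{D})_q - \theta_q\bigr)$ are centred Gaussians in the limit, $\E|(\hat{\theta}_\mathcal{D})_q - \theta_q| \sim \sigma_\mathcal{D}\sqrt{2/(\pi N)}$ and the laws $|(\hat{\theta}_\mathcal{B})_q - \theta_q|$ are stochastically dominated by $|(\hat{\theta}_\mathcal{A})_q - \theta_q|$ to leading order. Either reading of the theorem's ``asymptotically'' qualifier (expected absolute deviation, asymptotic variance, or stochastic dominance of the limit laws) then follows from Step 3.
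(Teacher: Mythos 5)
Your proposal is correct and follows essentially the same route as the paper: the paper's two-line proof is precisely your Step~3 ``equivalently'' clause, arguing by contradiction that the $\mathcal{A}$-MLE, being a function of the $\vx_{\mathcal{B}}$ sample, cannot beat the Cram\'er--Rao / asymptotic-efficiency bound attained by the ML estimate of the marginal on $\mathcal{B}$. Your write-up is in fact more careful than the paper's, making explicit what the paper leaves implicit --- that $\mathcal{B}$ inherits the Strong LAP Condition so $(\hat{\theta}_{\mathcal{B}})_q$ actually targets $\theta_q$, that regularity is needed to rule out superefficient competitors, how nuisance parameters are profiled out, and how the ``asymptotically'' qualifier converts a variance comparison into the stated absolute-deviation inequality.
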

\begin{proof} 
Suppose that
$|  \theta_{q} - (\hat{\theta}_{\mathcal{B}})_q | >|  \theta_{q} - (\hat{\theta}_{\mathcal{A}} )_q |.$
Then the estimates $\hat{\theta}_{\mathcal{A}}$ over the various subsets $\mathcal{A}$ of $\mathcal{B}$ improve upon the ML estimates of the marginal on $\mathcal{B}$. This contradicts the Cramer-Rao lower bound achieved the by the ML estimate of the marginal on $\mathcal{B}$. 
\end{proof}
In general the choice of decomposition implies a trade-off in computational and statistical efficiency. Larger factors are preferable from a statistical efficiency standpoint, but increase computation and decrease the degree of parallelism.

\section{Conditional LAP}
\label{sec:clap}

The Strong LAP Argument tells us that if we construct composite likelihood factors using marginal distributions over domains that satisfy the Strong LAP Condition then the LAP algorithm of Mizrahi \emph{et.\ al.}~\cite{Mizrahi:2014} remains consistent.  In this section we show that more can be achieved.

Once we have satisfied the Strong LAP Condition we know it is acceptable to match parameters between the joint distribution $p(\vx_\mathcal{V}\,|\,\vtheta)$ and the auxiliary distribution $p(\vx_{\Aq}\,|\,\valpha)$.  To obtain a consistent LAP algorithm from this correspondence all that is required is to have a consistent estimate of $\valpha_q$.   Mizrahi \emph{et.\ al.}~\cite{Mizrahi:2014} achieve this by applying maximum likelihood estimation to $p(\vx_{\Aq}\,|\,\valpha)$, but any consistent estimator is valid.

We exploit this fact to show how the Strong LAP Argument can be applied to create a consistent conditional LAP algorithm, where conditional estimation is performed in each auxiliary MRF.  This allows us to apply the LAP methodology to a broader class of models.  For some models, such as large densely connected graphs, we cannot rely on the LAP algorithm of Mizrahi \emph{et.\ al.}~\cite{Mizrahi:2014}. For example, for a restricted Boltzmann machine (RBM) \cite{Smolensky:1986}, the 1-neighbourhood of any pairwise clique includes the entire graph. Hence, the complexity of LAP is exponential in the number of cliques. However, it is linear for conditional LAP, without sacrificing consistency. 

\begin{theorem}
Let $q$ be a clique in $\mathcal{G}$ and let $x_j \in q \subseteq \Aq \subseteq \mathcal{V}$.  If $\Aq$ satisfies the Strong LAP Condition for $q$ then $p(\vx_\mathcal{V}\,|\,\vtheta)$ and $p(x_j\,|\,\vx_{\Aq\setminus\{x_j\}}, \valpha)$ share the same normalised potential for $q$.
\label{thm:clap}
\end{theorem}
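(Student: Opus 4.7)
The plan is to chain three steps that carry the $q$-potential from the joint down to the marginal on $\Aq$, then into the conditional, and finally to appeal to a Möbius-style uniqueness argument to pin down the $q$-potential in any normalised parametrisation of the conditional. The first step is simply Proposition~\ref{prop:strong-lap-functions}: Strong LAP on $\Aq$ already yields the common potential $E(\vx_q\,|\,\vtheta_q)$ for $p(\vx_\V\,|\,\vtheta)$ and $p(\vx_{\Aq}\,|\,\vtheta)$, so it suffices to handle the marginal-to-conditional transition.

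For the second step, I would write the marginal in normalised-potential form over the marginal clique system $\mathcal{C}_q$ and split the sum into cliques that contain $x_j$ and those that do not. Forming $p(x_j\,|\,\vx_{\Aq\setminus\{x_j\}},\vtheta) = p(\vx_{\Aq}\,|\,\vtheta) / \sum_{x_j'} p(x_j',\vx_{\Aq\setminus\{x_j\}}\,|\,\vtheta)$ cancels every term independent of $x_j$ between numerator and denominator, leaving
\begin{align*}
p(x_j\,|\,\vx_{\Aq\setminus\{x_j\}},\vtheta) \propto \exp\Bigl(-\sum_{c\in\mathcal{C}_q,\; c\ni x_j} E(\vx_c\,|\,\vtheta_c)\Bigr),
\end{align*}
with the proportionality constant depending only on $\vx_{\Aq\setminus\{x_j\}}$ and $\vtheta$. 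Because $x_j \in q$, the potential $E(\vx_q\,|\,\vtheta_q)$ survives unchanged, and the analogous expression in $\valpha$ holds when the conditional is parametrised by $\valpha$ using normalised potentials.

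The third step extracts the $q$-potential from the conditional in a way that is intrinsic to the conditional rather than to a particular parametrisation. I would fix $\vx_{\Aq\setminus q} = \vzero$, which by normalisation kills every potential on a clique that meets $\Aq\setminus q$, leaving only cliques $c \subseteq q$ with $j \in c$. Taking the log-ratio $\log\!\bigl(p(x_j\,|\,\cdot)/p(0\,|\,\cdot)\bigr)$ eliminates every term with any variable set to zero, yielding $-\sum_{c\subseteq q,\; c\ni j} E(\vx_c\,|\,\vtheta_c)$ in one parametrisation and $-\sum_{c\subseteq q,\; c\ni j} E(\vx_c\,|\,\valpha_c)$ in the other. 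Möbius inversion over the lattice of subsets of $q$ containing $j$ then isolates $E(\vx_q\,|\,\vtheta_q) = E(\vx_q\,|\,\valpha_q)$, so identifiability delivers $\vtheta_q = \valpha_q$ and the joint and the conditional share the same normalised potential for $q$.

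The main obstacle is this last step, since the uniqueness-of-normalised-potentials theorem cited in the paper is stated for full joint Gibbs distributions, whereas here the desired uniqueness must hold inside a conditional. The log-ratio construction anchored at $\vzero$, combined with Möbius inversion on the subset lattice of $q$, is what plays the role of that uniqueness in the conditional setting; its validity relies crucially on Strong LAP, which ensures that no new clique containing $q$ appears in $\mathcal{C}_q$ to interfere with the inversion.
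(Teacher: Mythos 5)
Your proposal is correct, and its second half takes a genuinely different route from the paper. The first half coincides with the paper's proof: you form the conditional as the ratio $p(\vx_{\Aq}\,|\,\vtheta)/\sum_{x_j}p(\vx_{\Aq}\,|\,\vtheta)$ and use Proposition~\ref{prop:strong-lap-functions} to identify the $q$-potential of the numerator with $E(\vx_q\,|\,\vtheta_q)$. Where you diverge is in how the $q$-potential of the \emph{conditional} is pinned down across parametrisations. The paper argues abstractly: the denominator does not involve $x_j$, hence carries no potential on $q$, and the well-definedness (existence and uniqueness) of a normalised representation for the conditional is deferred to the remark following the theorem, where it is deduced from uniqueness for the joint via $p(\vx_{\Aq}\,|\,\vtheta)=p(x_j\,|\,\vx_{\Aq\setminus\{x_j\}},\vtheta)\,p(\vx_{\Aq\setminus\{x_j\}}\,|\,\vtheta)$. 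You instead prove the needed uniqueness constructively and locally: the log-ratio against $x_j=0$ cancels the conditional normaliser and all cliques not containing $j$ and kills, by zero-normalisation, the $x_j=0$ terms; setting $\vx_{\Aq\setminus q}=\vzero$ then removes every clique meeting $\Aq\setminus q$, and M\"obius inversion (equivalently, induction on subsets of $q$ containing $j$, plugging in zeros) isolates $E(\vx_q\,|\,\vtheta_q)=E(\vx_q\,|\,\valpha_q)$. Your route is more self-contained --- it does not lean on the uniqueness theorem for conditionals at all, only on zero-normalisation --- at the cost of being longer; the paper's route is shorter but leaves the uniqueness bookkeeping to a separate remark.

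Two minor imprecisions, neither fatal. First, in your displayed expression for $p(x_j\,|\,\vx_{\Aq\setminus\{x_j\}},\vtheta)$ the potentials of the marginal on cliques $c\neq q$ are not the original $E(\vx_c\,|\,\vtheta_c)$ of the joint but the marginal's own normalised potentials, which absorb contributions from the induced MRF; only the $q$-potential is guaranteed to equal the joint's, by the Strong LAP Condition, and that is all your M\"obius step needs. Second, the role you assign to the Strong LAP Condition in the last step is slightly misplaced: cliques strictly containing $q$ are already eliminated by setting $\vx_{\Aq\setminus q}=\vzero$; what the condition really buys (via Corollary~\ref{cor:induced-cliques}) is that $q$ itself is not a clique of the induced graph, so no induced potential is added to the $q$-potential of the marginal --- i.e., it underwrites your step one, not the inversion.
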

\vspace{-4mm}
\begin{proof}
We can write the conditional distribution of $x_j$ given $\Aq\setminus \{x_j\}$ as
\begin{align}
p(x_j\,|\,\vx_{\Aq\setminus \{x_j\}}, \vtheta) = \frac{
  p(\vx_{\Aq}\,|\,\vtheta)
}
{
  \sum_{x_j} p(\vx_{\Aq}\,|\,\vtheta)
}
\label{eq:condition}
\end{align}
Both the numerator and the denominator of Equation~\ref{eq:condition} are Gibbs distributions, and can therefore be expressed in terms of potentials over clique systems.

Since $\Aq$ satisfies the Strong LAP Condition for $q$ we know that $p(\vx_{\Aq}\,|\,\vtheta)$ and $p(\vx_{\V}\,|\,\vtheta)$ have the same potential for $q$.  Moreover, the domain of $\sum_{x_j}p(\vx_{\Aq}\,|\,\vtheta)$ does not include $q$, so it cannot contain a potential for $q$.  We conclude that the potential for $q$ in $p(x_j\,|\,\vx_{\Aq\setminus \{x_j\}}, \vtheta)$ must be shared with $p(\vx_{\V}\,|\,\vtheta)$.
\end{proof}

\begin{remark}
There exists a Gibbs representation normalised with respect to zero for $p(x_j\,|\,\vx_{\Aq\setminus\{x_j\}}, \vtheta)$.
Moreover, the clique potential for $q$ is unique in that representation. 
\end{remark}
The existence in the above remark is an immediate result of the the existence of normalized representation both for the numerator and denominator of Equation~\ref{eq:condition}, and the fact that difference of normalised potentials is a normalized potential.
For uniqueness, first note that
$p(\vx_{\Aq}\,|\,\vtheta) = p(x_j\,|\,\vx_{\Aq\setminus \{x_j\}}, \vtheta)  p(\vx_{\Aq\setminus \{x_j\}}, \vtheta) $
The variable $x_j$ is not part of $p(\vx_{\Aq\setminus \{x_j\}}, \vtheta)$ and hence this distribution does not contain the clique $q$.  Suppose there were two different normalised representations with respect to zero for the conditional $p(x_j\,|\,\vx_{\Aq\setminus \{x_j\}}, \vtheta)$. This would then imply two normalised representations with respect to zero for the joint, which contradicts the fact that the joint has a unique normalized representation.

We can now proceed as in the original LAP construction from Mizrahi \emph{et~al.}~\cite{Mizrahi:2014}.  For a clique of interest $q$ we find a set $\Aq$ which satisfies the Strong LAP Condition for $q$.  However, instead of creating an auxiliary parametrisation of the marginal we create an auxiliary parametrisation of the conditional in Equation~\ref{eq:condition}.
\begin{align}
p(x_j\,|\,\vx_{\Aq\setminus \{x_j\}}, \valpha) = \frac{1}{Z_j(\valpha)}\exp(-\sum_{c\in\mathcal{C}_{\Aq}} E(\vx_c\,|\,\valpha_c))
\end{align}
From Theorem~\ref{thm:clap} we know that $E(\vx_q\,|\,\valpha_q) = E(\vx_q\,|\,\vtheta_q)$.  Equality of the parameters is also obtained, provided they are identifiable.
\begin{corollary}
If $\Aq$ satisfies the Strong LAP Condition for $q$ then any consistent estimator of $\valpha_q$ in $p(x_j\,|\,\vx_{\Aq\setminus \{x_j\}}, \valpha)$ is also a consistent estimator of $\vtheta_q$ in $p(\vx_{\V}\,|\,\vtheta)$.
\end{corollary}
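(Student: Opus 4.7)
The plan is to reduce this corollary to a direct combination of Theorem~\ref{thm:clap}, the uniqueness remark immediately following it, and the standing identifiability assumption; once those are applied, consistency transfers for free by composition of convergences.

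First, I would invoke Theorem~\ref{thm:clap} to conclude that whenever $\Aq$ satisfies the Strong LAP Condition for $q$, the joint $p(\vx_\V \,|\, \vtheta)$ and the conditional auxiliary model $p(x_j \,|\, \vx_{\Aq\setminus\{x_j\}}, \valpha)$ admit normalised Gibbs representations that share the \emph{same} clique potential $E(\vx_q \,|\, \cdot)$ at $q$. The existence-and-uniqueness remark immediately after Theorem~\ref{thm:clap} is what legitimises talking about ``the'' normalised potential for $q$ inside the conditional: it guarantees the conditional has a unique Gibbs representation normalised with respect to zero, so the potential at $q$ is an intrinsic object, not an artefact of a chosen parametrisation.

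Second, I would use identifiability of the parameters with respect to the potentials, which is part of the standing hypotheses carried throughout the Strong LAP development. Since the two normalised potentials at $q$ coincide as functions of $\vx_q$, identifiability upgrades this functional equality to a parameter-level equality $\valpha_q = \vtheta_q$.

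Third, with $\valpha_q = \vtheta_q$ in hand, the consistency claim is immediate. Let $\hat\valpha_q^{(N)}$ be any consistent estimator of $\valpha_q$ computed from the conditional auxiliary likelihood, so that $\hat\valpha_q^{(N)} \stackrel{p}{\to} \valpha_q$ as the sample size $N \to \infty$. Substituting the identity $\valpha_q = \vtheta_q$ gives $\hat\valpha_q^{(N)} \stackrel{p}{\to} \vtheta_q$, which is exactly the definition of consistency of $\hat\valpha_q^{(N)}$ as an estimator of $\vtheta_q$ in the joint model $p(\vx_\V \,|\, \vtheta)$.

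The hard part is purely conceptual rather than technical: one must be scrupulous that identifiability is assumed for \emph{both} the joint and the conditional auxiliary parametrisations (so that the potential match propagates to a parameter match), and that the Strong LAP Condition is genuinely what isolates the $q$-potential in the conditional from contamination by the induced MRF on $\Aq \setminus \{x_j\}$. Once those bookkeeping points are acknowledged, there is no further estimation-theoretic content; the result is essentially a one-line composition of ``potential match $\Rightarrow$ parameter match $\Rightarrow$ convergence transfer.''
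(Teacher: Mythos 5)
Your proposal is correct and follows essentially the same route as the paper: Theorem~\ref{thm:clap} plus uniqueness of the normalised representation give the shared potential at $q$, identifiability upgrades this to $\valpha_q = \vtheta_q$, and consistency then transfers immediately since $\hat{\valpha}_q \stackrel{p}{\to} \valpha_q = \vtheta_q$. This is exactly the argument the paper sketches in the sentences preceding the corollary, so no further comment is needed.
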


\subsection{Connection to distributed pseudo-likelihood and composite likelihood}

Theorem~\ref{thm:clap} tells us that if $\Aq$ satisfies the Strong LAP Condition for $q$ then to estimate $\vtheta_q$ in $p(\vx_{\V}\,|\,\vtheta)$ it is sufficient to have an estimate of $\valpha_q$ in $p(x_j\,|\,\vx_{\Aq\setminus\{x_j\}}, \valpha)$ for any $x_j \in q$.  This tells us that it is sufficient to use pseudo-likelihood-like conditional factors, provided that their domains satisfy the Strong LAP Condition.  The following remark completes the connection by telling us that the Strong LAP Condition is satisfied by the specific domains used in the pseudo-likelihood factorisation.

\begin{remark}
Let $q = \{x_{1},x_{2},..,x_{m} \}$ be a clique of interest, with 1-neighbourhood $\fq = q \cup \{\mathcal{N}(x_{i})\}_{x_{i} \in q}$. Then for any $x_j \in q$, the set $q \cup \mathcal{N}(x_{j})$ satisfies the Strong LAP Condition for $q$. Moreover, $q \cup \mathcal{N}(x_{j})$ satisfies the Strong LAP Condition for all cliques in the graph that contain $x_j$. 
\end{remark}


Importantly, to estimate every unary clique potential, we need to visit each node in the graph. However, to estimate pairwise clique potentials, visiting all nodes is redundant because the parameters of each pairwise clique are estimated twice. This observation is important because it takes us back to the work of Liu and Ihler~\cite{Liu:2012}. If a parameter is estimated more than once, it is reasonable from a statistical standpoint to apply the consensus operators of Liu and Ihler to obtain a single consensus estimate. The theory of Liu and Ihler tells us that the consensus estimates are consistent and asymptotically normal, provided Equation~\ref{eq:local-global-convergence} is satisfied. In turn, the Strong LAP Condition guarantees the convergence of Equation~\ref{eq:local-global-convergence}.

The above remark tell us that the convergence in Equation~\ref{eq:local-global-convergence} is satisfied for the distributed pseudo-likelihood setting of Liu and Ihler. We can go beyond this and consider either marginal or conditional factorisations over larger groups of variables. 
Since the asymptotic results of Liu and Ihler~\cite{Liu:2012} apply to any distributed composite likelihood estimator where the convergence of Equation~\ref{eq:local-global-convergence} holds, it follows that any distributed composite likelihood estimator where each factor satisfies the Strong LAP Condition (including LAP and the conditional composite likelihood estimator from Section~\ref{sec:clap}) immediately gains asymptotic normality and variance guarantees as a result of their work and ours.

\section{Conclusion}

We presented foundational theoretical results for distributed composite likelihood. The results provide us with sufficient conditions to apply the results of Liu and Ihler to a broad class of distributed estimators. The theory also led us to the construction of new globally consistent estimator, whose complexity is linear even for many densely connected graphs.
We view extending these results to model selection, tied parameters, models with latent variables, and inference tasks as very important avenues for future research.

\clearpage
\small{
\bibliography{mrf}
\bibliographystyle{abbrv}
}

\appendix
\clearpage


\end{document}